\newcommand{\todo}[1]{}
\newtheorem{theorem}{Theorem}
\begin{document}

\title{Predicting Contextual Sequences via Submodular Function Maximization}

\author{Author Names Omitted for Anonymous Review. Paper-ID 104}



%
\author{\authorblockN{Debadeepta Dey\authorrefmark{1},
Tian Yu Liu\authorrefmark{1},
Martial Hebert\authorrefmark{1}, and
J. Andrew Bagnell\authorrefmark{1}
}
\authorblockA{\authorrefmark{1}The Robotics Institute\\
Carnegie Mellon University,
Pittsburgh, PA \\ Email: (debadeep, tianyul, hebert, dbagnell)@ri.cmu.edu}
}

\maketitle

\begin{abstract}
\emph{Sequence optimization}, where the items in a list are ordered to
maximize some reward has many applications such as web advertisement
placement, search, and control libraries in robotics. Previous work in sequence
optimization produces a static ordering that does not
take any features of the item or context of the problem into account.  In this
work, we propose a general approach to order the items within
the sequence based on the \emph{context} (\emph{e.g.}, perceptual information,
environment description, and goals). We take a simple, efficient,
reduction-based approach where
the choice and order of the items is established by repeatedly learning simple
classifiers or regressors for each ``slot'' in the sequence. Our approach leverages
recent work on submodular function maximization to provide
a formal regret reduction from submodular sequence optimization to
simple cost-sensitive prediction. We apply our contextual sequence prediction
algorithm to optimize control libraries and demonstrate results on two robotics
problems: manipulator trajectory prediction and mobile robot path planning.

\end{abstract}

\IEEEpeerreviewmaketitle

\section{Introduction}
\label{introduction}
Optimizing the order of a set of choices is fundamental to many problems such as web
search, advertisement placements as well as in robotics and control.
\textit{Relevance} and \textit{diversity} are important properties of an
optimal ordering or sequence. 
In web search, for instance, if the search
term admits many different interpretations then the results should be
interleaved with items from each interpretation \cite{Radlinski08learningdiverse}. Similarly in
advertisement placement on web pages, advertisements should be chosen such
that within the limited screen real estate they are diverse yet relevant to
the page content. In robotics, \emph{control libraries} have the same
requirements for relevance and diversity in the ordering of member actions. In
this paper, we apply \emph{sequence optimization} to develop near-optimal
control libraries. In the context of control libraries, a sequence refers to
a ranked list of control action choices rather than a series of actions to be
taken. Examples of control actions include grasps for
manipulation, trajectories for mobile robot navigation or seed trajectories
for initializing a local trajectory optimizer.

\emph{Control libraries} are a collection of control actions obtained by
sampling a useful set of often high dimensional control trajectories or policies. Examples of control
\todo{discretization is technically correct, but might carry a different
  connotation then you mean-- i.e. brute force. -- FIXED}
libraries include a collection of feasible grasps for manipulation
\cite{chinellato2003}, a collection
of feasible trajectories for mobile robot navigation
\cite{green2006paths}, and a collection of expert-demonstrated trajectories for a walking robot (Stolle
et. al.\ \cite{stolle2006policies}). Similarly, recording demonstrated
trajectories of experts aggressively flying unmanned aerial vehicles (UAVs)
has enabled dynamically feasible trajectories to be quickly generated by
concatenating a suitable subset of stored trajectories in the control
library \cite{frazzoli2000rhc}.
\todo{Finish above sentence and relate to below -- FIXED ABOVE}

Such libraries are an effective means of spanning the
space of all feasible control actions while at the same time dealing with
computational constraints.
The performance of control libraries on the specified task can be
significantly improved by careful consideration of the \emph{content} and
\emph{order} of actions in the library. To make this clear let us
consider specific examples:

\textbf{Mobile robot navigation.}  In mobile robot navigation the task is to find a collision-free, low cost of
traversal path which leads to the specified goal on a map. Since sensor
horizons are finite and robots usually have constrained motion models and
non-trivial dynamics, a library of trajectories respecting the dynamic and
kinematic constraints of the robot are precomputed and stored in memory.
This constitutes the control library. It is desired to
sample a subset of trajectories at every time step so that the overall cost
of traversal of the robot from start to goal is minimized.

\textbf{Trajectory optimization.}  Local trajectory optimization techniques are sensitive to initial trajectory seeds. Bad
trajectory initializations may lead to slow optimization, suboptimal
performance, or even remain in collision. 
In this setting, the control actions are end-to-end trajectory seeds that act
as input to the optimization.
Zucker \cite{zucker2009proposal} and Jetchev et al.\ \cite{jetchev2009trajectory}
proposed methods for predicting trajectories from a
precomputed library using features of the environment, yet these methods do
not provide recovery methods when the prediction fails. Having a
\emph{sequence} of initial trajectory seeds provides fallbacks should earlier
ones fail.

\todo{This sentence is good, but it's unconnected to the later discussion about predicting sequences. Perhaps give the examples first,
discuss methods of selecting libraries, then explain contextual one-off
prediction has been done. (THe above sentence.) Then go into your attempted
contribution. -- FIXED}

\textbf{Grasp libraries.}  During selection of grasps for an object, a library of feasible grasps can be
evaluated one at a time until a collision-free, reachable grasp is found.\todo{Long sentences. What makes a grasp successful? Working in practice
or being collision free and reachable by the robot? -- FIXED}
While a naive ordering of grasps can be based on force closure and
stability criteria \cite{berenson2007}, if a grasp fails, then grasps similar
to it are also likely to fail. A more principled ordering approach
which takes into account features of the environment can reduce depth of the
sequence that needs to be searched by having diversity in higher ranked grasps.

\todo{I'd put the mobile robot first. -- FIXED}

Current state-of-the-art methods in the problems we address either predict only a
single control action in the library that has the highest score for the current environment, or use an ad-hoc ordering of actions
such as random order or by past rate of success.
If the predicted action fails then systems (e.g. manipulators and autonomous vehicles) are unable to 
recover or have to fall back on some heuristic/hard-coded contingency
plan. Predicting a \emph{sequence} of options to evaluate is
necessary for having intelligent, robust behavior. Choosing the order of
evaluation of the actions based on the context of the environment leads to
more efficient performance.

\todo{It's unclear what the sequence means. You mean that each option can be evaluated, but it could
mean as it stands that a sequence is actually executed on the system. Further, the last sentence is true, but perhaps not what
you mean:
Crucially, any approach for
predicting multiple options must take into account the context to be robust
to environmental changes.

All control library methods take into account context and hence handle environmental changes. Contextual libraries choose the order and elements
based on this context to be much more efficient.
-- MADE CLEAR IN THE FIRST PARAGRAPH AND CHANGED HERE
}

\todo{Perhaps the below should be sub-section. Achieving contextual
  optimization? Should it be after you define the problem and sub-modularity
  or some such? -- CONTEST- the contribution needs to be before the math and
  we need the below for the contribution to make sense.}

A naive way of predicting contextual sequences would be to train a multi-class
classifier over the label space consisting of all possible sequences of a
certain length. This space is exponential in the number of classes and
sequence length posing information theoretic difficulties. A more reasonable
method would be to use the greedy selection technique by Steeter
et al. \cite{streeter2007online} over the hypothesis space of all predictors which is guaranteed to yield sequences
within a constant factor of the optimal sequence. Implemented naively, this remains expensive as it
must explicitly enumerate the label space. Our simple reduction based approach where we
propose to train multiple multi-class classifiers/regressors to mimic greedy selection
given features of the environment is both efficient and maintains performance guarantees of the greedy selection.

Perception modules using \todo{This paragraph might be coming late? -- FIXED
  moved from below to here}
sensors such as cameras and lidars are part and parcel of modern robotic
systems. Leveraging such information in addition to the feedback of success or
failure is \emph{conceptually} straightforward: instead of considering a sequence of control actions, we consider
a sequence of classifiers which map features $X$ to control actions $A$, and attempt to find the best
such classifier at each slot in the control action sequence.
By using contextual features, our method has the benefit
of closing the loop with perception while maintaining the performance guarantees in Streeter et al.\cite{streeter2007online}.

The outlined examples present loss functions that depend only on the ``best''
action in the sequence, or attempt to minimize the prediction depth to find a
satisfactory action. Such loss functions are monotone, submodular -- \textit{i.e.}, one
with diminishing returns.\footnote{For more information on submodularity and optimization of submodular functions we refer readers to
the tutorial \cite{guestrin08submodtut}.} We define these functions in section \ref{contextual_optimization_of_control_libraries} and review the online
submodular function maximization approach of Streeter et al.\ \cite{streeter2007online}. We also describe our contextual sequence optimization (\textsc{ConSeqOpt}) algorithm in detail. Section \ref{experiments}
shows our algorithm's performance improvement over alternatives for local trajectory optimization
for manipulation  and in path planning for mobile robots.


Our contributions in this work are:
\begin{itemize}
\item We propose a simple, near-optimal reduction for contextual sequence
  optimization. Our approach moves from predicting a single decision based on
  features to making a \emph{sequence} of predictions, a problem that arises
  in many domains including advertisement prediction \cite{streeter2009online,Radlinski08learningdiverse}
  and search. 
\todo{Footnote seems excessive. Just cite both? -- FIXED }
\item The application of this technique to the contextual optimization of
  control libraries. We demonstrate the efficacy of the approach on two
  important problems: robot manipulation planning and mobile robot
  navigation. Using the sequence of actions generated by our approach we
  observe improvement in performance over sequences generated by either random
  ordering or decreasing rate of success of the actions.
\item Our algorithm is generic and can be naturally
  applied to any problem where ordered sequences (e.g.,
  advertisement placement, search, recommendation systems, etc) need to be
  predicted and relevance and diversity are important.
\todo{Ordering of final sentence is unnatural. Rephrase. -- FIXED}
\end{itemize}

\section{Contextual Optimization of Sequences}
\label{contextual_optimization_of_control_libraries}

\subsection{Background}
\label{subsection.background}

\todo{Input space to what? Decide if you really think this belongs above the
  problem definition. -- CONTEST}
The control library is a set $\mathcal{V}$ of actions. Each action is denoted by $a \in \mathcal{V}$.
\footnote{In this work we assume that each action choice takes the same time to execute
  although the proposed approach can be readily extended to handle different
  execution times.}
Formally, a function
$f : \mathcal{S} \rightarrow \Re_+ $ is monotone
submodular for any sequence $S \in \mathcal{S}$ where $ \mathcal{S} $ is the
set of all sequences of actions if it satisfies the following two properties:
\begin{itemize}
\item (Monoticity) for any sequence $S_{1}, S_{2} \in \mathcal{S}$, $f(S_{1})
  \leq f(S_{1} \oplus S_{2})$ and $f(S_{2}) \leq f(S_{1} \oplus S_{2})$
\item (Submodularity) for any sequence $S_{1}, S_{2} \in \mathcal{S}$,
  $f(S_{1})$ and any action $a \in \mathcal{V}$,
  $f(S_{1} \oplus S_{2} \oplus \langle a \rangle) - f(S_{1} \oplus S_{2}) \leq f(S_{1}
  \oplus \langle a \rangle) - f(S_{1})$
\end{itemize}
where $\oplus$ denotes order dependent concatenation of sequences. These
imply that the function always increases as more actions are added
to the sequence (monotonicity) but the gain obtained by adding an action to a larger
pre-existing sequence is less as compared to addition to a smaller
pre-existing sequence (sub-modularity).

For control library optimization, we attempt to optimize one of two possible
criteria: the cost of the best action $a$ in a sequence (with a budget on sequence size) or the time (depth in sequence) 
to find a satisficing action. For the former, we consider the function,
\begin{equation}
  \label{f_path_planning}
  f \equiv \frac{N_o - \min(\textnormal{cost}(a_1),\textnormal{cost}(a_2),\ldots,\textnormal{cost}(a_N))}{N_o},
\end{equation}
where $\textnormal{cost}$ is an arbitrary cost on an action (${a_i}$) given an
environment and $N_o$ is a constant, positive normalizer which is the highest
\todo{Why non-negative? It better be positive. The highest-->largest. -- FIXED }
cost. \footnote{For mobile robot path planning, for instance, $cost(a_i)$ is
  typically a simple measure of mobility penalty based on terrain for traversing
  a trajectory $a_i$ sampled from a set of trajectories and terminating in a
  heuristic cost-to-go estimate, compute by, e.g. A*.} Note that the $f$ takes in as arguments the
sequence of actions $a_1, a_2, \ldots, a_N$ directly, but is also implicitly dependent on
the current environment on which the actions are evaluated in $cost(a_i)$.
Dey et al.\ \cite{dey2011efficientcontrol} prove that this criterion is monotone submodular in sequences of control
actions and can be maximized-- within a constant factor-- by greedy
approaches similar to Streeter et al.\
\cite{streeter2007online}. 

For the latter optimization criteria, which arises in grasping and trajectory
seed selectin, we define the monotone, sub-modular loss function
$f:\mathcal{S} \rightarrow [0,1]$ as $f \equiv P(S)$ where $P(S)$ is the
probability of successfully grasping an object in a given scenario using the
sequence of grasps provided. It is easy to check \cite{dey2011efficientcontrol} that this function is also monotone and submodular,
as the probability of success always increases as we consider additional elements. Minimizing the depth in the control
library to be evaluated becomes our goal. In the rest of the paper all
objective functions are assumed to be monotone submodular unless noted
otherwise. \todo{All functions? Maybe the objective functions, but certainly
  not all functions. -- FIXED}

While optimizing these over library actions is effective, the ordering of
actions does not take into account the current context.
\todo{Well, the evaluation does. The library doesn't. -- FIXED}
People do not attempt to grasp objects based only on previous performance of grasps: they take into account the position,
orientation of the object, the proximity and arrangement of clutter around the object and also
their own position relative to the object in the current environment. 
\todo{Hard to introspect. Perhaps humans don't do contextual prediction at
  all. Just check *a lot* of options against perceptual data. Careful about
  this claim. -- FIXED by removing}

\subsection{Our Approach}
\label{subsection.ourapproach}

We consider functions that are submodular over sequences of either control
actions themselves or, crucially, over classifiers that take as input
environment features $\mathbf{X}$ and map to control actions
$\mathcal{V}$. Additionally, by considering many environments, the expectation
of $f$ in equation \eqref{f_path_planning} over these environments also maintains these properties. In our work, we always
consider the expected loss averaged over a (typically empirical) distribution of environments. \todo{Note that our objective
function has the right properties over classifiers as well as ground actions. }

In Algorithm \ref{conseqopt.classification}, we present a simple
approach for learning such a near-optimal contextual
control library.
\subsection{Algorithm for Contextual Submodular Sequence Optimization}
\begin{figure*}[ttt!]
  \begin{minipage}[t]{\textwidth}
    \begin{algorithm}[H]
      \caption{\texttt{Algorithm for training \textsc{ConSeqOpt} using
          classifiers}}
      \label{conseqopt.classification}
      \begin{algorithmic}[1]
        \REQUIRE \texttt{sequence length N, multi-class cost sensitive classifier routine
          $\mathbf{\pi}$, dataset $D$ of $|D|$ number of environments and associated features $\mathbf{X}$,
          library of control actions $\mathcal{V}$}
        \ENSURE \texttt{sequence of classifiers $\mathbf{\pi_1}, \mathbf{\pi_2}, \ldots, \mathbf{\pi_N}$}
        \FOR{$i = 1$ \TO $N$} \label{class.line1}
        \STATE $\mathbf{M_{L_i}} \leftarrow \texttt{computeTargetActions}(\mathbf{X},
        \mathbf{Y_{\pi_1,\pi_2,\ldots,\pi_{i-1}}}, \mathcal{V})$ \label{class.line2}
        \STATE $\mathbf{\pi_i} \leftarrow \texttt{train}(\mathbf{X}, \mathbf{M_{L_{i}}})$ \label{class.line3}
        \STATE $\mathbf{Y_{\pi_i}} \leftarrow \texttt{classify}(\mathbf{X})$ \label{class.line4}
        \ENDFOR \label{class.line5}
      \end{algorithmic}
    \end{algorithm}
  \end{minipage}
  \vfill
  \begin{minipage}[t]{\textwidth}
    \begin{algorithm}[H]
      \caption{\texttt{Algorithm for training \textsc{ConSeqOpt} using
          regressors}}
      \label{conseqopt.regression}
      \begin{algorithmic}[1]
        \REQUIRE \texttt{sequence length N, regression routine
          $\Re$, dataset $D$ of $|D|$ number of environments,
          library of control actions $\mathcal{V}$}
        \ENSURE \texttt{sequence of regressors $\Re_1, \Re_2, \ldots, \Re_N$}
        \FOR{$i = 1$ \TO $N$} \label{reg.line1}
        \STATE $\mathbf{X_i}, \mathbf{M_{B_i}} \leftarrow \texttt{computeFeatures\&Benefit}(D, \mathbf{Y_{\Re_1,\Re_2,\ldots,\Re_{i-1}}}, \mathcal{V})$ \label{reg.line2}
        \STATE $\Re_i \leftarrow \texttt{train}(\mathbf{X_i}, \mathbf{M_{B_{i}}})$ \label{reg.line3}
        \STATE $\mathbf{\tilde{M}_{B_i}} \leftarrow \texttt{regress}(\mathbf{X_i}, \Re_i)$ \label{reg.line4}
        \STATE $\mathbf{Y_{\Re_i}} = \texttt{argmax}(\mathbf{\tilde{M}_{B_i}})$ \label{reg.line5}
        \ENDFOR \label{reg.line6}
      \end{algorithmic}
    \end{algorithm}
  \end{minipage}
  \hfill
\end{figure*}

\begin{figure}[ht]
  \begin{center}
    \includegraphics[width=0.38\textwidth]{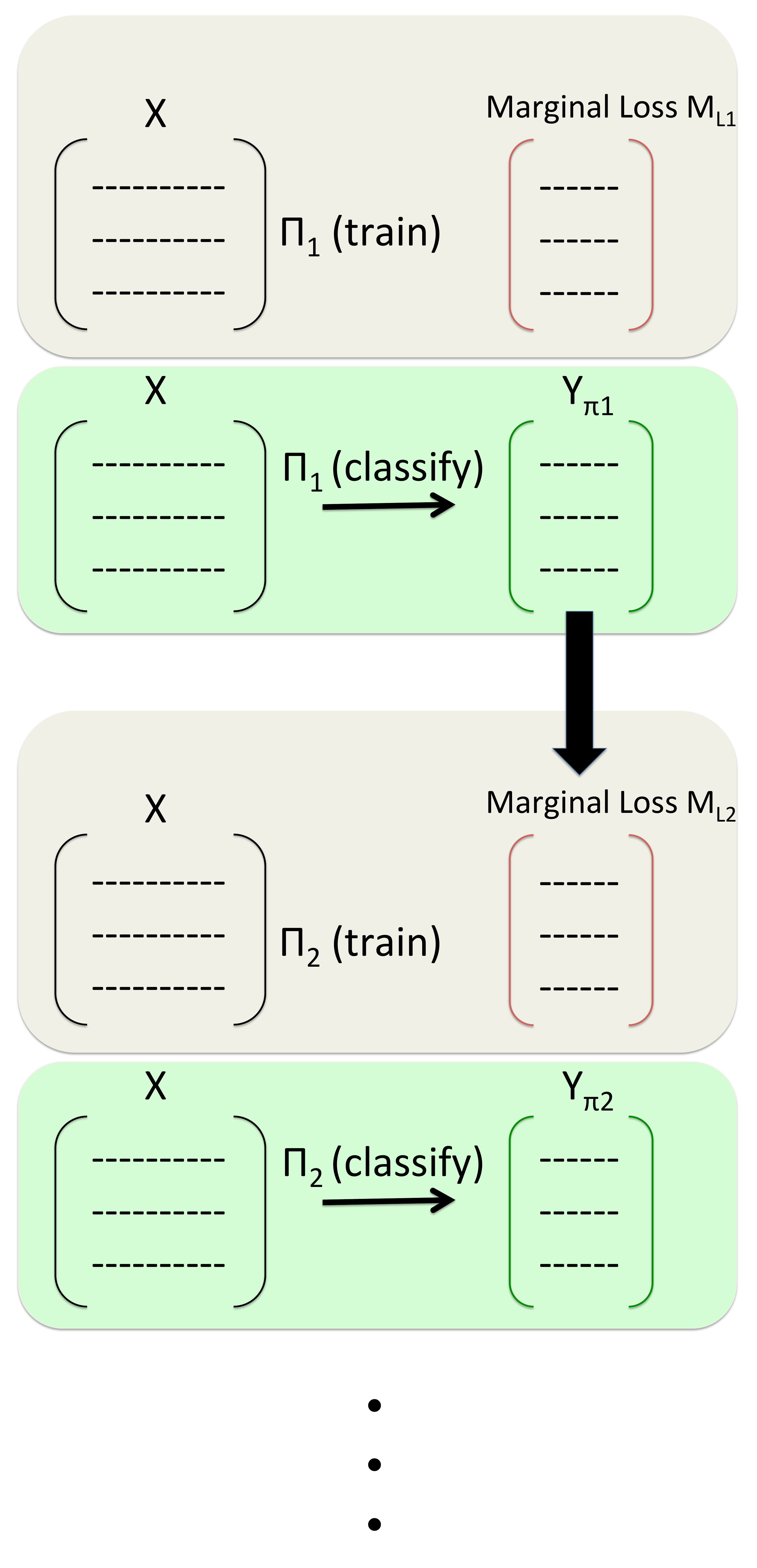}
    \caption{Schematic of training sequence of classifiers for regret reduction of
      contextual sequence optimization to multi-class, cost-sensitive, classification}
    \label{fig.schematic}
  \end{center}
\end{figure}

Figure \ref{fig.schematic} shows the schematic diagram for algorithm
\ref{conseqopt.classification} which trains a classifier for each slot of the
sequence. Define matrix $\mathbf{X}$ to be the set of 
features from a distribution of example environments (one feature vector per
row) and matrix $\mathbf{Y}$ to be the corresponding target
action identifier for each example. Let each feature vector contain $L$
attributes. Let $D$ be the set of example environments
containing $|D|$ examples. The size of $\mathbf{X}$ is $|D|
\times L$ and size of $\mathbf{Y}$ is $|D| \times 1$. We denote the
$i^{th}$ classifier by $\mathbf{\pi_i}$. Define $\mathbf{M_{L_i}}$ to be the matrix of
marginal losses for each environment for the $i^{th}$ slot of the sequence. In
the parlance of cost-sensitive learning $\mathbf{M_{L_i}}$ is the example-dependent
cost  matrix. $\mathbf{M_{L_i}}$ is of dimension $|D| \times
|\mathcal{V}|$. Each row of $\mathbf{M_{L_i}}$ contains, for the corresponding
environment, the loss suffered by the classifier for selecting a particular
action $a \in \mathcal{V}$. The most beneficial action has $0$ loss while
others have non-zero losses. These losses are normalized to be within
$[0-1]$. We detail how to calculate the entries of
$\mathbf{M_{L_i}}$ below. Classifier inputs are the set of feature vectors $\mathbf{X}$ for the dataset
of environments and the marginal loss matrix $\mathbf{M_{L_i}}$. 

For ease of understanding let us walk through the training of the first two
classifiers $\pi_1$ and $\pi_2$. 

Consider the first classifier training in Figure \ref{fig.schematic} and its inputs $\mathbf{X}$ and
$\mathbf{M_{L_1}}$. Consider the first row of $\mathbf{M_{L_1}}$. Each element of this row
corresponds to the loss incurred if the corresponding action in $\mathcal{V}$
were taken on the corresponding environment whose features are in the first
row of $\mathbf{X}$. The best action has $0$ loss while all the others have relative
losses in the range $[0-1]$ depending on how much worse they are compared to
the best action. This way the rest of the rows in $\mathbf{M_{L_1}}$ are filled out.
The cost sensitive classifier $\mathbf{\pi_1}$ is trained. The set of features  $\mathbf{X}$
from each environment in the training set are again presented to it to
classify. The output is matrix $\mathbf{Y_{\pi_1}}$ which contains the selected action
for the $1^{st}$ slot for each environment. As no element of the hypothesis
class performs perfectly this results in $\mathbf{Y_{\pi_1}}$, where not every
environment had the $0$ loss action picked. 

Consider the second classifier training in Figure
\ref{fig.schematic}. Consider the first row of
$\mathbf{M_{L_2}}$. Suppose control action id $13$ was selected by classifier $\pi_1$ in the
classification step for the first environment, which provides a gain of $0.6$ to the objective function
$f$ i.e. $f[13] = 0.6$. For each of the control actions $a$ present in the
library $\mathcal{V}$ find the action
which provides maximum marginal improvement i.e.\ $a_{max} = argmax_a(f([13, a]) -
f([13])) = argmax_a(f([13, a]) - 0.6$. Additionally
convert the marginal gains computed for each $a$ in the
library to proportional losses and store in the first row of
$\mathbf{M_{L_2}}$. If $a_{max}$ is the action with the maximum marginal gain
then the loss for each of the other actions is $f([13, a_{max}]) -
f([13,a])$. $a_{max}$ has $0$ loss while other actions have $>=0$ loss. The 
rest of the rows are filled up similarly. $\pi_2$ is trained, and evaluated on
same dataset to produce $Y_{\pi_2}$. \todo{Careful with the phrasing of the
  last sentence. IT sounds like an overfitting nightmare. Train and test on
  same set? -- FIXED}

This procedure is repeated for all $N$ slots
producing a sequence of classifiers
$\mathbf{\pi_1},\mathbf{\pi_2},\ldots,\mathbf{\pi_N}$. The idea is that a
classifier must suffer a high loss when it chooses a
control action which provides little marginal gain when a 
higher gain action was available. Any cost-sensitive multi-class classifier may be used.

During test time, for a given environment features are extracted, and the
classifiers associated  with each slot of the sequence outputs a control
action to fill the slot. This sequence can then be evaluated as usual.
This procedure is formalized in Algorithm \ref{conseqopt.classification}. In
$\texttt{computeTargetActions}$ the
previously detailed procedure for calculating the entries of the marginal loss
matrix $M_{L_i}$ for the $i^{th}$ slot is carried out, followed by the
training step in $\texttt{train}$ and classification step in $\texttt{classify}$.

Algorithm \ref{conseqopt.regression} has a similar structure as
algorithm \ref{conseqopt.classification}. This alternate formulation has the advantage of being able to 
add actions to the control library without retraining the sequence
of classifiers. Instead of directly identifying a target
class, we use a squared-loss regressor in each slot to produce an estimate of the marginal
benefit from each action at that particular slot. Hence $\mathbf{M_{B_{i}}}$ is a
$|D| \times |\mathcal{V}|$ matrix of the actual marginal benefit computed in a similiar
fashion as $\mathbf{M_{L_i}}$ of Algorithm \ref{conseqopt.classification}, and
$\mathbf{\tilde{M}_{B_i}}$
is the estimate given by our regressor at $i^{th}$ slot. In line \ref{reg.line2} we
compute the feature matrix $\mathbf{X_i}$. In this case, a feature vector is computed \emph{per
action} per environment, and uses information from the previous slots'
target choice $\mathbf{Y_{\Re_i}}$. For feature vectors of length $L$, $\mathbf{X_i}$ has
dimensions $|D||\mathcal{V}| \times L$. The features and marginal benefits at
$i^{th}$ slot are used to train regressor $\mathbf{\Re_i}$, producing the
estimate $\mathbf{\tilde{M}_{B_i}}$. We then
pick the action $a$ which produces the maximum $\mathbf{\tilde{M}_{B_i}}$ to be our
target choice $\mathbf{Y_{\Re_i}}$, a $|D|$ length vector of indices into $\mathcal{V}$ for each environment.

\subsection{Reduction Argument}
We establish a formal regret reduction \cite{beygelzimer2005error} between
cost sensitive multi-class classification error and the resulting error on the
learned sequence  of classifiers.
Specifically, we demonstrate that if we consider the control
actions to be the classes and train a series of classifiers-- one for each
slot of the sequence-- on the features of a distribution of environments then
we can produce a near-optimal sequence of classifiers. This sequence of
classifiers can be invoked to approximate the greedy sequence
constructed by allowing additive error in equation \eqref{approx_greedy}. 
\todo{It actually approximates the optimal sequence of classifiers, not just
  the optimal greedy sequence! -- FIXED}

\begin{theorem}
  \label{classification_proof}
  If each of the classifiers ($\mathbf{\pi_i}$) trained in
  Algorithm \ref{conseqopt.classification} achieves multi-class cost-sensitive regret of $r_i$, then the
  resulting sequence of classifiers is within
  at least $(1-\frac{1}{e}) \max_{S \in
    \mathcal{S}}{f(S)} - \sum_{i=1}^Nr_i$ of the optimal such  sequence of
  classifiers $S$ from the same hypothesis space. \footnote{When the objective is
    to minimize the time (depth in sequence) to find a satisficing element then
    the resulting sequence of classifiers $f(\hat{S}_{\langle N \rangle}) 
    \leq
    4 \int_0^\infty 1 - \max_{S \in \mathcal{S}}{f(S_{\langle n \rangle})}dn +
    \sum_{i=1}^N r_i$.}

\end{theorem}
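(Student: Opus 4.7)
The strategy is a two-stage reduction. First, translate per-slot classification regret into per-slot regret in marginal gain of the submodular $f$. Second, invoke a noisy-greedy analysis for monotone submodular functions (as in Streeter et al.\ \cite{streeter2007online}) to convert the accumulated per-slot errors into an additive loss in the final sequence value.

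First I would recall the approximate-greedy lemma for monotone submodular sequence optimization: if at slot $i$ the selected element achieves a marginal gain at most $\epsilon_i$ below the best possible marginal extension of the currently-built prefix $\hat{S}_{i-1}$, then after $N$ slots the constructed sequence $\hat{S}_N$ satisfies $f(\hat{S}_N) \ge (1 - 1/e)\max_{S \in \mathcal{S}} f(S) - \sum_{i=1}^N \epsilon_i$. This is exactly the extension of Streeter et al.'s analysis to noisy greedy choices; the proof is the usual telescoping-sum argument that tracks the gap between the current objective value and the optimum, using monotonicity and submodularity to bound the deficit at each step and resolving the geometric recurrence to yield the $(1-1/e)$ factor with an additive $\sum_i \epsilon_i$ slack.

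Second I would show that the cost-sensitive regret $r_i$ of $\pi_i$ is exactly such an $\epsilon_i$ once we lift the domain from actions to classifiers. By construction of the target loss matrix $M_{L_i}$ in line \ref{class.line2}, the cost assigned to a predicted action on a given environment equals the shortfall in marginal gain of $f$ (conditioned on the prefix predicted by $\pi_1,\ldots,\pi_{i-1}$) of that action relative to the best action for that environment. Taking expectation over the distribution of environments, and noting that the expected-$f$ is itself monotone submodular over sequences of classifiers (expectation preserves both properties pointwise), the cost-sensitive regret of $\pi_i$ against the best classifier $\pi^\star_i$ in the hypothesis space is precisely the per-slot shortfall in expected marginal gain that $\epsilon_i$ denotes above. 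Crucially, the comparison class in the approximate-greedy lemma is the space of sequences drawn from the same classifier hypothesis class, which matches the statement of the theorem.

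Combining the two pieces yields $f(\hat{S}_N) \ge (1 - 1/e) \max_{S \in \mathcal{S}} f(S) - \sum_{i=1}^N r_i$ as claimed. The footnote case (minimizing depth to a satisficing element) follows the same recipe but replaces the $(1-1/e)$ maximization bound with the analogous min-sum submodular cover guarantee from Streeter et al., which produces the factor of $4$ and the integral form. The main obstacle will be the lifting step: one has to verify that cost-sensitive regret, which is defined as the averaged cost gap against the best classifier in the hypothesis class, genuinely upper-bounds the expected greedy shortfall and not merely the pointwise (per-environment) shortfall, since $\max$ and $\mathbb{E}$ do not commute. The resolution is that we compare $\pi_i$ against the best \emph{classifier} (which must commit before seeing the environment) rather than the best action-per-environment, so both sides of the inequality are evaluated under the same expectation and the reduction goes through cleanly.
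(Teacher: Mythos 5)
Your proposal is correct and follows essentially the same route as the paper's own proof: invoke Streeter et al.'s additive-error greedy guarantee and identify the per-slot error $\epsilon_i$ with the cost-sensitive regret $r_i$ via the construction of the marginal-loss matrix $M_{L_i}$. If anything, your treatment of the lifting step is more careful than the paper's sketch, which writes the approximate-greedy condition as a max over actions while defining regret against the best classifier; your observation that both sides must be compared under the same expectation over environments is exactly the point the paper leaves implicit.
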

\begin{proof} \emph{(Sketch)}
  Define the loss of
  a multi-class, cost-sensitive classifier $\mathbf{\pi}$ over a distribution of
  environments $D$ as $l(\mathbf{\pi}, D)$. Each example can
  be represented as $(x_n, m_n^1, m_n^2, m_n^3, \ldots, m_n^{|\mathcal{V}|})$
  where $x_n$ is the set of features representing the $n^{th}$ example
  environment and $m_n^1, m_n^2, m_n^3, \ldots, m_n^{|\mathcal{V}|}$ are the per
  class costs of misclassifying $x_n$. $m_n^1, m_n^2, m_n^3, \ldots,
  m_n^{|\mathcal{V}|}$ are simply the $n^{th}$ row of $M_{L_{i}}$ (which
  corresponds to the $n^{th}$ environment in the dataset $D$). The
  best class has a $0$ misclassification cost and while others are greater than
  equal to $0$ (There might be multiple actions which will yield equal marginal
  benefit). Classifiers generally minimize the expected loss $l(\mathbf{\pi},
  D) = \underset{{(x_n, m_n^1, m_n^2, m_n^3, \ldots, m_n^{|\mathcal{V}|})
      \sim D}}{\mathbb{E}}[C_{{\pi}(x_n)}]$ where $C_{{\pi}(x_n)} =
  m^{\pi(x_n)}_n$ denotes the
  example-dependent multi-class misclassification cost. The best classifier in
  the hypothesis space $\Uppi$ minimizes $l(\mathbf{\pi}, D)$
  \begin{equation}
    \mathbf{\pi^*} = \underset{\mathbf{\pi} \in \Uppi}{\operatorname{argmin}}
    \underset{{(x_n, m_n^1, m_n^2, m_n^3, \ldots, m_n^{|\mathcal{V}|})
        \sim D}}{\mathbb{E}}[C_{{\pi}(x_n)}]
  \end{equation}

  The regret of
  $\mathbf{\pi}$ is defined as $r =
  l(\mathbf{\pi}, D) - l(\mathbf{\pi^*}, D)$.
  Each classifier associated with $i^{th}$ slot of the sequence has a regret $r_i$. 

  Streeter et al.\ \cite{streeter2007online} consider the case where the
  $i^{th}$ decision made by the greedy algorithm is performed with additive error
  $\epsilon_i$. Denote by
  $\hat{S} = \langle \hat{s}_1, \hat{s}_2, \ldots, \hat{s}_N \rangle$ a variant
  of the sequence $S$ in which the $i^{th}$ argmax is evaluated with additive error
  $\epsilon_i$. This can be formalized as 
  \begin{equation}
    \label{approx_greedy}
    f(\hat{S}_i \oplus \hat{s}_i) - f(\hat{S}_i) \geq \max_{s_i \in
      \mathcal{V}}{f(\hat{S}_i \oplus s_i) - f(\hat{S}_i)} -\epsilon_i
  \end{equation}
  where $\hat{S}_0 = \langle \rangle$, $\hat{S}_i = \langle \hat{s}_{1},
  \hat{s}_{2}, \hat{s}_{3}, \ldots, \hat{s}_{i-1} \rangle$ for $i \geq 1$ and
  $s_i$ is the predicted control action by classifier $\mathbf{\pi_i}$. They demonstrate that, for a budget or sequence length of N
  \begin{equation}
    \label{streeter_theorem}
    f(\hat{S}_{\langle N \rangle})  \geq (1-\frac{1}{e}) \max_{S \in
      \mathcal{S}}{f(S)} - \sum_{i=1}^N\epsilon_i
  \end{equation}
  assuming each control action takes equal time to execute.

  Thus the $i^{th}$ argmax in equation \eqref{approx_greedy} is chosen with some
  error $\epsilon_i = r_i$. An $\epsilon_i$ error made by classifier $\pi_i$ corresponds to the 
  classifier picking an action whose marginal gain is $\epsilon_i$ less than the maximum possible.
  Hence the performance bound on additive error greedy sequence
  construction stated in equation \eqref{streeter_theorem} can be restated as 
  \begin{equation}
    \label{regret_reduction}
    f(\hat{S}_{\langle N \rangle})  \geq (1-\frac{1}{e}) \max_{S \in
      \mathcal{S}}{f(S)} - \sum_{i=1}^Nr_i
  \end{equation}
  \todo{What original performance guarrantee? -- FIXED by removing the line on
  original performance guarantee. Had actually meant to say that the
  performance guarantee of Streeter et al. still remained over our
  classifiers.}
\end{proof}

\begin{theorem}
The sequence of squared-loss regressors ($\Re_i$) trained in Algorithm \ref{conseqopt.regression} is within at
least $(1-\frac{1}{e}) \max_{S \in
    \mathcal{S}}{f(S)} - \sum_{i=1}^N \sqrt{2(|\mathcal{V}|-1)r_{reg_i}}$ of the optimal sequence of
  classifiers $S$ from the hypothesis space of multi-class cost-sensitive
  classifiers.
\todo{Wait, we want a guarrantee on the final action performance. What exactly are you guaranteeing? Is the Langford reduction for regret or error? Also,
you never say squared error below. What is "regression" error?}
\end{theorem}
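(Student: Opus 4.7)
The plan is a two-step reduction. First I would bound each slot's cost-sensitive classification regret by a function of the corresponding squared-loss regression regret $r_{reg_i}$; then I would invoke Theorem \ref{classification_proof} (in particular the additive-error greedy bound \eqref{regret_reduction}) with these per-slot classification regrets playing the role of the $r_i$.

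For the first step, observe that Algorithm \ref{conseqopt.regression} uses its regressor $\Re_i$ only through the argmax rule $\hat{a}(x) = \argmax_a \tilde{M}_{B_i}(x,a)$. Writing $b^*(x,a)$ for the true marginal benefit of action $a$ at environment $x$ and $\hat{b}(x,a) = \tilde{M}_{B_i}(x,a)$ for its estimate, the cost-sensitive regret incurred at $x$ is $b^*(x,a^*(x)) - b^*(x,\hat{a}(x))$ with $a^*(x) = \argmax_a b^*(x,a)$. I would decompose this as
\begin{equation*}
[b^*(x,a^*) - \hat{b}(x,a^*)] + [\hat{b}(x,a^*) - \hat{b}(x,\hat{a})] + [\hat{b}(x,\hat{a}) - b^*(x,\hat{a})].
\end{equation*}
The middle bracket is nonpositive by the definition of $\hat{a}$, so the pointwise regret is at most the sum of the absolute prediction errors at $a^*$ and $\hat{a}$. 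Taking expectations over $x$, applying Jensen/Cauchy--Schwarz, and using the standard worst-case accounting over the $|\mathcal{V}|-1$ possible ``incorrect'' actions as in the regression-to-cost-sensitive-classification reduction of Beygelzimer et al.~\cite{beygelzimer2005error}, I would obtain the per-slot bound $r_i \leq \sqrt{2(|\mathcal{V}|-1)\, r_{reg_i}}$.

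The second step is a direct substitution. Algorithm \ref{conseqopt.regression} is structurally identical to Algorithm \ref{conseqopt.classification} once we view the argmax of each $\Re_i$ as a cost-sensitive classifier with per-slot regret $r_i$; Theorem \ref{classification_proof} therefore applies verbatim, yielding
\begin{equation*}
f(\hat{S}_{\langle N \rangle}) \geq \left(1 - \frac{1}{e}\right) \max_{S \in \mathcal{S}} f(S) - \sum_{i=1}^N \sqrt{2(|\mathcal{V}|-1)\, r_{reg_i}}.
\end{equation*}

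The hard part will be pinning down the constants $2$ and $|\mathcal{V}|-1$ in the regression-to-classification step: the factor $2$ reflects that we pay for prediction error at both the predicted optimum $\hat{a}$ and the true optimum $a^*$, while the factor $|\mathcal{V}|-1$ arises because squared-loss regret can be distributed across any of the non-optimal actions, and Cauchy--Schwarz converts a sum of $|\mathcal{V}|-1$ per-action squared terms into the stated square-root bound. Beyond this bookkeeping, the remainder is a routine composition of the regression reduction with the already-established classification-to-sequence reduction of Theorem \ref{classification_proof}.
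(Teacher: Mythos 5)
Your proposal is correct and follows essentially the same route as the paper: the paper's proof likewise composes a per-slot regression-to-cost-sensitive-classification regret bound of $\sqrt{2(|\mathcal{V}|-1)r_{reg_i}}$ with the additive-error greedy guarantee of Theorem \ref{classification_proof}. The only difference is that the paper imports that per-slot bound as a black box from Langford et al.\ \cite{langford2005secoc} rather than deriving it, so your three-term decomposition and Cauchy--Schwarz accounting is simply an (accurate) sketch of the cited lemma rather than a new argument.
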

\begin{proof} \emph{(Sketch)}
  Langford et al. \cite{langford2005secoc} show that the regret reduction
  from multi-class classification to squared-loss regression
  has a regret reduction of $\sqrt{2(|k|-1)r_{reg}}$ where $k$ is the number of
  classes and $r_{reg}$ is the squared-loss regret on the underlying regression
  problem. In Algorithm \ref{conseqopt.regression} we use squared-loss regression to perform multi-class classification
  thereby incurring for each slot of the sequence a reduction regret of $\sqrt{2(|\mathcal{V}|-1)r_{reg_i}}$ where
  $|\mathcal{V}|$ is the number of actions in the control library. Theorem \ref{classification_proof} states 
  that the sequence of classifiers is within at least $f(\hat{S}_{\langle N \rangle})  \geq (1-\frac{1}{e}) \max_{S \in \mathcal{S}}{f(S)} - \sum_{i=1}^Nr_i$ 
  of the optimal sequence of classifiers. Plugging in the regret reduction from \cite{langford2005secoc} we get the result
  that the resulting sequence of regressors in Algorithm \ref{conseqopt.regression} is within at least
  $(1-\frac{1}{e}) \max_{S \in \mathcal{S}}{f(S)} - \sum_{i=1}^N
  \sqrt{2(|\mathcal{V}|-1)r_{reg_i}}$ of the optimal sequence of multi-class
  cost-sensitive classifiers.

\end{proof}

\section{Case Studies}
\label{experiments}

\subsection{Robot Manipulation Planning via Contextual Control Libraries}

We demonstrate the application of \textsc{ConSeqOpt} for manipulation planning
on a 7 degree of freedom manipulator. 

Recent work \cite{ratliff2009chomp,jetchev2009trajectory} has shown
that by relaxing the hard constraint of avoiding obstacles into a
soft penalty term on collision, simple local optimization techniques can
quickly lead to smooth, collision-free trajectories suitable
for robot execution. Often the default initialization trajectory seed is a simple
straight-line initialization in joint space \cite{ratliff2009chomp}. This
heuristic is surprisingly effective in many environments, but suffers from
local convergence and may fail to find a trajectory when
one exists. In practice, this may be tackled by providing cleverer
initialization seeds \cite{jetchev2009trajectory, zucker2009proposal}. While
these methods reduce the chance of falling into local
minima, they do not have any alternative plans should the chosen
initialization seed fail. A contextual ranking of a library of initialization trajectory seeds
can provide feasible alternative seeds
should earlier choices fail. Proposed initialization trajectory seeds can be
developed in many ways including human demonstration \cite{ratliff2007} or use
of a slow but complete planner\cite{kuffner2000}.

For this experiment we attempt to plan a trajectory to a pre-grasp pose over
the target object in a cluttered environment using the local optimization
planner CHOMP \cite{ratliff2009chomp} and minimize the total planning and
execution time of the trajectory. A training dataset of $|D|=310$ environments
and test dataset of $212$ environments are generated. Positions of the
target object and obstacles on the table are randomly assigned. To
populate the control library, we consider initialization
trajectories that move first to an ``exploration point" and then to the
goal. The exploration points are generated by randomly perturbing the midpoint
of the original straight line initialization in joint space. The resulting
initial trajectories are then piecewise straight lines in joint space from the
start point to the exploration point, and from the exploration point to the
goal. Half of the seed trajectories are prepended with a short path to start
from a left-arm configuration, and half are in right-arm configuration. This
is because the local planner has a difficult time switching between
configurations, while environmental context can provide a lot of information
about which configuration to use. $30$ trajectories are generated and form our
action library $\mathcal{V}$. Figure \ref{trajectoryInitial} shows an example set
for a particular environment. Notice that in this case the straight-line
initialization of CHOMP goes through the obstacle and therefore CHOMP has a
difficult time finding a valid trajectory using this initial seed.

\begin{figure}
  \centering
  \mbox{
    \subfloat[The default straight-line initialization of CHOMP is marked in
    orange. Notice this initial seed goes straight through the obstacle and
    causes CHOMP to fail to find a collision-free
    trajectory.]{\label{trajectoryInitial}\includegraphics[width=0.45\textwidth]{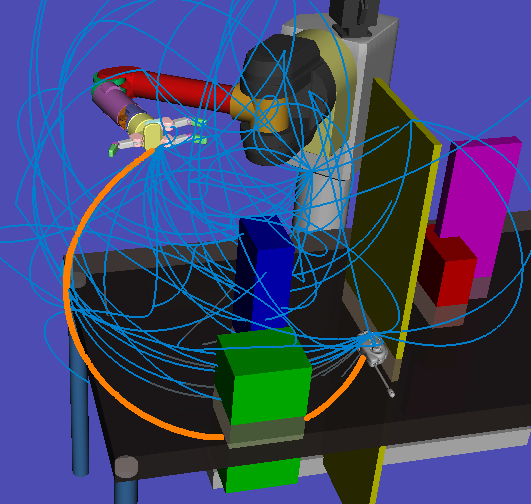}}
  }
  \mbox{
    \subfloat[The initialization seed for CHOMP found using \textsc{ConSeqOpt}
    is marked in orange. Using this initial seed CHOMP is able to find a
    collision free path that also has a relatively short execution
    time.]{\label{trajectoryFinal}\includegraphics[width=0.45\textwidth]{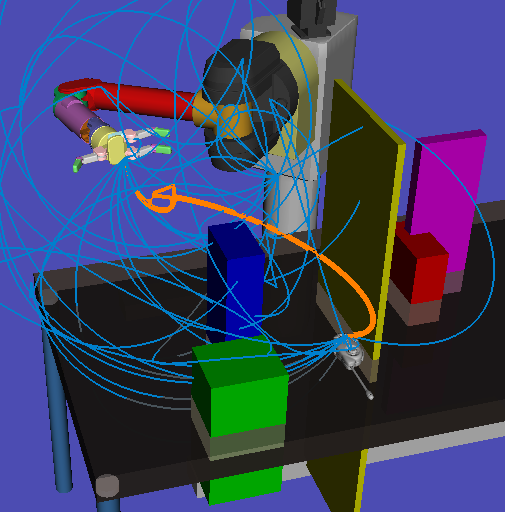}}
  } \\  
\caption{CHOMP initialization trajectories generated as control
    actions for \textsc{ConSeqOpt}. Blue lines trace the end effector path
    of each trajectory in the library. Orange lines in each image trace the
    initialization seed generated by the default straight-line approach and by
    \textsc{ConSeqOpt}, respectively.}
  \label{fig:trajectoryLibrary}
\end{figure}

In our results we use a small number ($1-3$) of slots in our sequence to ensure
the overhead of ordering and evaluating the library is small. When CHOMP fails to find a collision-free trajectory
for multiple initializations seeds, one can always fall back on
slow but complete planners. \todo{Not sure the previous sentence is necessary or helps. Consider removing. --FIXED}
Thus the contextual control sequence's role is to quickly evaluate a few good options and choose the initialization trajectory
that will result in the minimum execution time. We note that in our
experiments, the overhead of ordering and evaluating the library is negligible
as we rely on a fast predictor and features computed as part of
the trajectory optimization, and by choosing a small sequence length we can
effectively compute a motion plan with expected planning time under $0.5$s.
We can solve most manipulation problems that arise in our manipulation
research very quickly, falling back to initializing the trajectory
optimization with a complete motion planner only in the most difficult of
circumstances. 

For each initialization trajectory, we calculate $17$ simple feature values
which populate a row of the feature matrix $\mathbf{X_i}$: length of
trajectory in joint space; length of
trajectory in task space, the xyz values of the end effector position at the
exploration point (3 values), the distance field \todo{Is it clear what a
  distance field is? -- We cant explain distance field without breaking
  phrasing and taking up a lot of space for an experimental detail.} values used by CHOMP at the
quarter points of the trajectory (3 values), joint values of the first 4
joints at both the exploration point (4 values) and the target pose (4 values),
and whether the initialization seed is in the same left/right kinematic arm
configuration \todo{What does left/right arm mean? -- FIXED: added kinematic to show it's a common term} as the target pose. During training time, we evaluate each
initialization seed in our library on all environments in the training set,
and use their performance and features to train each regressor $\Re_i$ in
\textsc{ConSeqOpt}. At test time, we simply run Algorithm \ref{conseqopt.regression} without
the training step to produce $Y_{\Re_1,\ldots,\Re_{N}}$ as the sequence of
initialization seeds to be evaluated. Note that while the first regressor uses only
the 17 basic features, the subsequent regressors also include the difference
in feature values between the remaining actions and the actions chosen
by the previous regressors. These difference features improve the algorithm's
ability to consider trajectory diversity in the actions chosen.

We compare \textsc{ConSeqOpt} with two other methods of ranking the
initialization library: a random ordering of the actions, and an ordering
by sorting the output of the first regressor. Sorting by the first regressor
is functionally the same as maximizing the absolute benefit rather than the
marginal benefit at each slot. We compare both the number of
CHOMP failures as well as the average execution time of the final
trajectory. For execution time, we assume the robot can be actuated at
$1$ rad/second for each joint and use the shortest trajectory generated using the
$N$ seeds ranked by \textsc{ConSeqOpt} as the performance. If we fail to find a
collision free trajectory and need to fall back to a complete planner
(RRT \cite{kuffner2000} plus trajectory optimization), we apply a maximum
execution time penalty of 40 seconds due to the longer computation time and
resulting trajectory.

The results over $212$ test environments are summarized in Figure \ref{fig:trajectoryResults}.  With only simple straight line initialization, CHOMP is unable to find a collision free trajectory in 162/212 environments, with a resulting average
execution time of 33.4s. 
While a single regressor ($N=1$) can reduce the number of CHOMP failures from
162 to 79 and the average execution time from 33.4s to 18.2s, when we extend
the sequence length, \textsc{ConSeqOpt} is able to reduce both metrics faster
than a ranking by sorting the output of the first regressor. This is because
for $N>1$, \textsc{ConSeqOpt} chooses a primitive that provides the maximum
marginal benefit, which results in trajectory seeds that have very different
features from the previous slots' choices. Ranking by the absolute benefit
tends to pick trajectory seeds that are similar to each other, and thus are
more likely to fail when the previous seeds fail. At a sequence length of 3,
\textsc{ConSeqOpt} has only 16 failures and an average execution time of 3
seconds. \textbf{A 90\% improvement in success rate and a 75\% reduction in execution time.} Note that planning times are generally negligible compared to execution times
for manipulation hence this improvement is significant. Figure \ref{trajectoryFinal} shows the initialization seed found by
\textsc{ConSeqOpt} for the same environment as in
Figure \ref{trajectoryInitial}. Notice that this seed avoids collision with
the obstacle between the manipulator and the target object enabling CHOMP to
produce a successful trajectory.

\begin{figure}[ht]
\begin{center}
\includegraphics[width=0.45\textwidth]{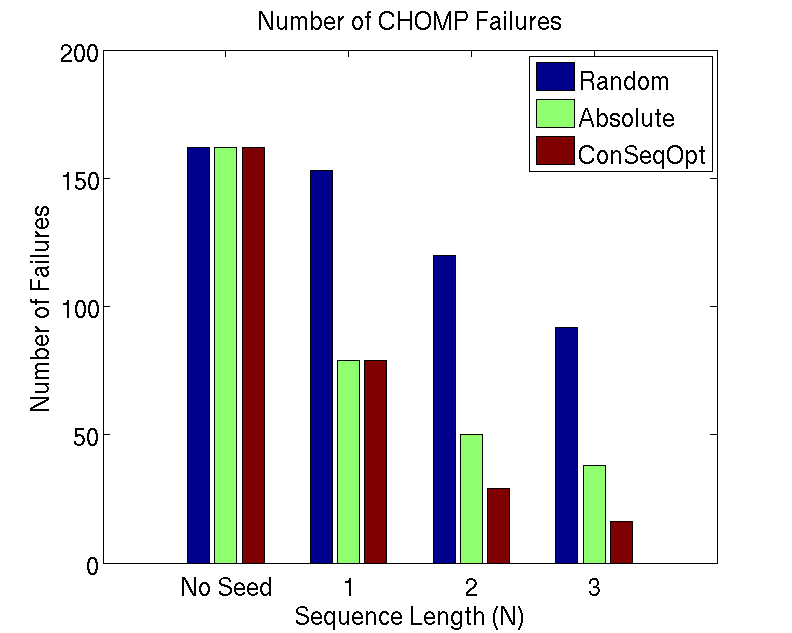}
\includegraphics[width=0.45\textwidth]{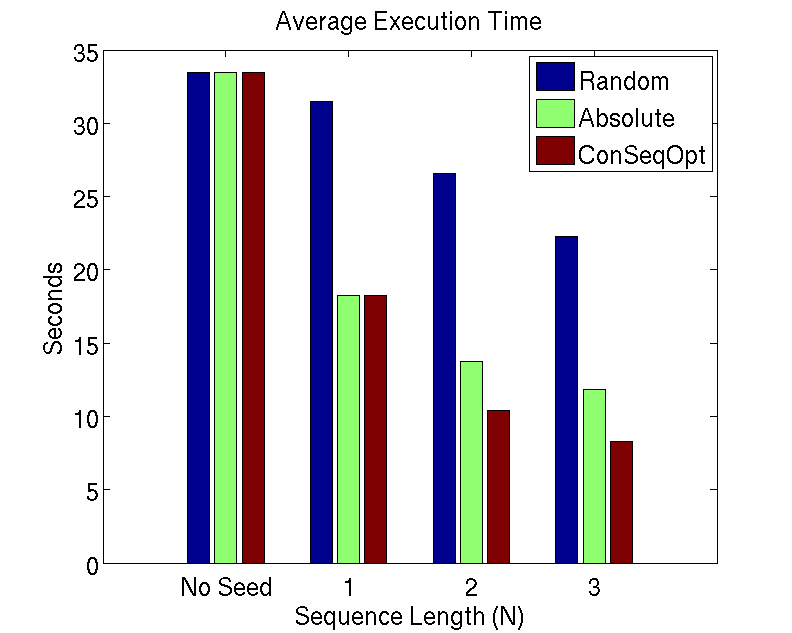}
\caption{Results of \textsc{ConSeqOpt} for manipulation planning
    in 212 test environments. The top image shows the number of CHOMP
    failures for three different methods after each slot in the
    sequence. \textsc{ConSeqOpt} not only significantly reduces the number of
    CHOMP failures in the first slot, but also further reduces the failure
    rate faster than both the other methods when the sequence length is
    increased. The same trend is observed in the bottom image, which shows the
    average execution time of the chosen trajectory. The `No Seed' column
    refers to the straight-line heuristic used by the original CHOMP
    implementation}
\label{fig:trajectoryResults}
\end{center}
\end{figure}

\subsection{Mobile Robot Navigation}
An effective means of path planning for mobile robots is to sample a budgeted
number of trajectories from a large library of feasible trajectories and traverse
the one which has the lowest cost of traversal for a small portion and repeat the process again. The sub-sequence
of trajectories is usually computed offline \cite{green2006paths,erickson2009survivability}. Such methods are widely used in modern, autonomous
ground robots including the two highest placing teams for DARPA Urban
Challenge and Grand Challenge
\cite{urmson2008,montemerlo2008,urmson2006,thrun2006}, LAGR
\cite{jackel2006darpa}, UPI \cite{bagnell2010crusher}, and
Perceptor \cite{kelly2006perceptor} programs. We use \textsc{ConSeqOpt} to
maximize this function and generate trajectory sequences taking the current
environment features. 
%

Figures \ref{overhead_color} and \ref{cost_map} shows a section of Fort
Hood, TX and the corresponding robot cost-map respectively. We
simulated a robot traversing between various random starting and goal
locations using the maximum-discrepancy trajectory \cite{green2006paths}
sequence as well as sequences generated by \textsc{ConSeqOpt} using
Algorithm \ref{conseqopt.classification}.  A texton library
\cite{winn2005texton} of $25$ k-means cluster centers was computed for the
whole overhead map. At run-time
the texton histogram for the image patch around the robot was used as
features. Online linear support vector machines (SVM) with slack re-scaling
\cite{scholkopf2002learning} were used as the
cost-sensitive classifiers for each slot. We report a $9.6$\% decrease over $580$ runs using $N=30$ trajectories in the
cost of traversal as compared to offline precomputed trajectory sequences
which maximize the area between selected trajectories \cite{green2006paths}. Our approach is able to choose which trajectories to use at each step based on the appearance of terrain (woods, brush, roads, etc.)
As seen in Figure \ref{robot_on_map} at each time-step \textsc{ConSeqOpt} the
trajectories are so selected that most of them fall in the empty space around
obstacles.

\begin{figure}[!htbp]
  \centering
  \mbox{
    \subfloat[Overhead color map of portion of Fort Hood,
    TX]{\label{overhead_color}\includegraphics[width=0.2\textwidth]{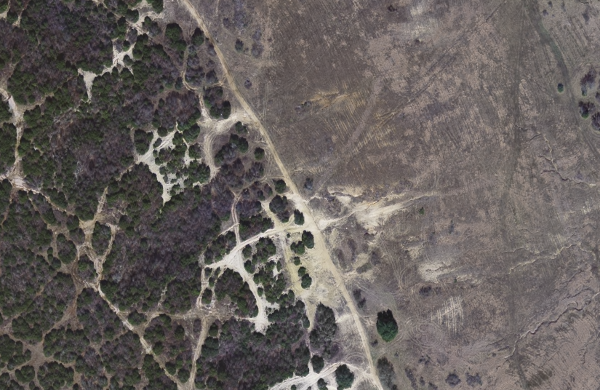}}
  }
  \mbox{
    \subfloat[Cost map of corresponding
    portion]{\label{cost_map}\includegraphics[width=0.2\textwidth]{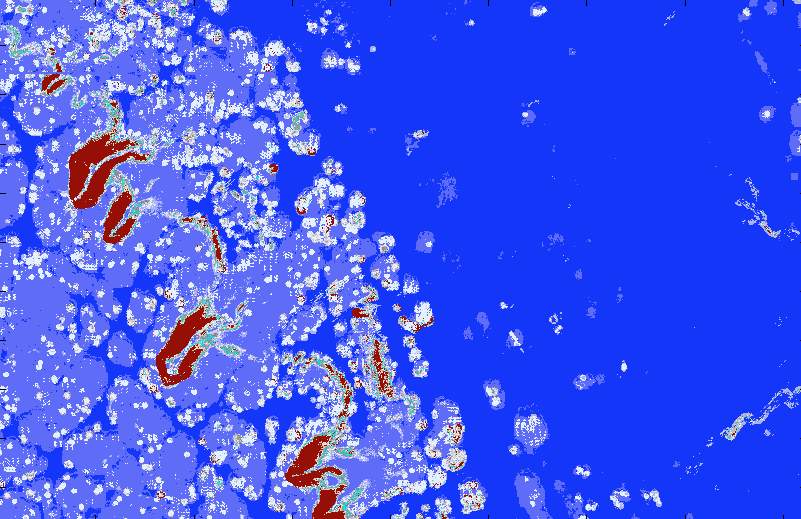}}
  } 
  \mbox{
  \subfloat[Robot traversing the map using \textsc{ConSeqOpt} generating
  trajectory sequences which try to avoid obstacles in the
  vicinity]{\label{robot_on_map}\includegraphics[width=0.4\textwidth]{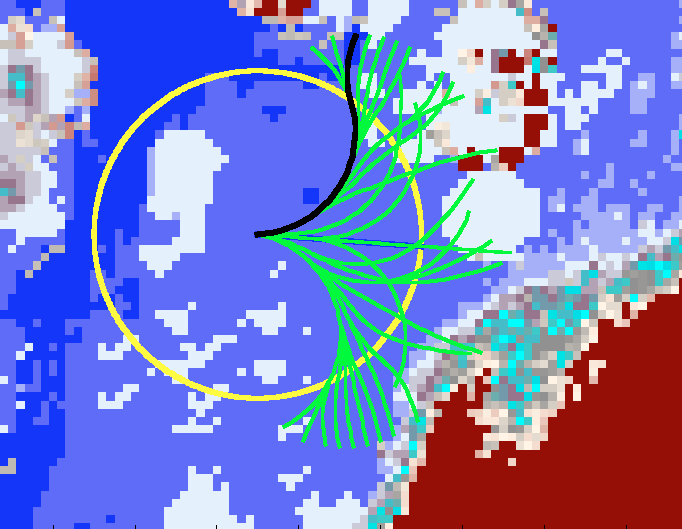}}
  }
\end{figure}






\bibliographystyle{plainnat}
\bibliography{../references}

\end{document}